\DeclareMathOperator*{\argmin}{argmin}
\newcommand{\norm}[1]{\left\lVert#1\right\rVert}
\newcommand{\mat}[0]{\begin{bmatrix}}
\newcommand{\mate}[0]{\end{bmatrix}}
\newcommand{\tn}[1]{\textnormal{#1}}
\newcommand{\vu}{\mathbf{u}}
\newcommand{\vx}{\mathbf{x}}
\theoremstyle{plain}
\newtheorem{prop}{Proposition}
\newtheorem{define}{Definition}
\newtheorem{problem}{Problem}
\title{\LARGE \bf
Multi-robot Task Assignment for Aerial Tracking \\ with Viewpoint Constraints
}
\author{Aaron Ray$^{1}$, Alyssa Pierson$^{1}$, Hai Zhu$^{2}$, Javier Alonso-Mora$^{2}$, Daniela Rus$^{1}$
\thanks{$^{1}$Computer Science and Artificial Intelligence Laboratory, Massachusetts Institute of Technology, Cambridge, MA 02139, USA
        {\tt\small \{aray, apierson, rus\}@csail.mit.edu}}%
\thanks{$^{2}$Cognitive Robotics, Delft University of Technology, 2628 CD, Delft, Netherlands
        {\tt\small \{H.Zhu, J.AlonsoMora\}@tudelft.nl }}%
\thanks{This work supported in part by the Office of Naval Research (ONR), and Singapore's DSO National Laboratory and SUTD under the ASTRALIS project.
Their support is gratefully acknowledged. }%
}
\begin{document}

\maketitle
\thispagestyle{empty}
\pagestyle{empty}

\begin{abstract}

We address the problem of assigning a team of drones to autonomously capture a set desired shots of a dynamic target in the presence of obstacles. We present a two-stage planning pipeline that generates offline an assignment of drone to shots and locally optimizes online the viewpoint. Given desired shot parameters, the high-level planner uses a visibility heuristic to predict good times for capturing each shot and uses an Integer Linear Program to compute drone assignments. An online Model Predictive Control algorithm uses the assignments as reference to capture the shots. The algorithm is validated in hardware with a pair of drones and a remote controlled car.

\end{abstract}

\section{INTRODUCTION}

We wish to develop algorithms for coordinating heterogeneous systems of aerial and ground agents when the ground agents are not cooperating with the aerial vehicles. One class of problems within this broad scope is following a ground-based moving agent (e.g. robot or human) with an aerial vehicle, subject to constraints such as “keep a certain feature of the agent in the field of view while avoiding environmental obstacles”. This problem is challenging because it requires a real time adaptive solution for the local control with global objectives and constraints. Practical and robust solutions will enable new applications such as autonomous drone videography that go beyond today’s recording capabilities. Current drone videography can follow an actor. In this paper we describe a solution that supports finer-grain specifications, such as “keep the actor’s face in the field of view”. The solution has to combine real-time local response with global planning to accommodate the presence of obstacles (e.g. avoid bridges). 

More specifically, we enable a team of videography drones to autonomously track and capture a sequence of desired shots of a moving target such as a ground robot or a person. Framing a scene for videography is a complicated process that depends on a range of aesthetic
preferences of the videographer. However, many of the important framing primitives can be distilled into a small set of parameters that define the camera's desired viewpoint, such as size of the target in the frame, position of the target in the image, and position of the camera relative to the target. We would like for a videographer to be able to specify a set of desired shots based on these parameters, and have the team of drones determine the best trajectories for capturing the video of the subject as it moves through the environment. We assume access to a good prediction of the subject's motion, as otherwise there is little use in pre-planning long sequences of shots.

\begin{figure}
    \centering
    \includegraphics[width=.8\columnwidth]{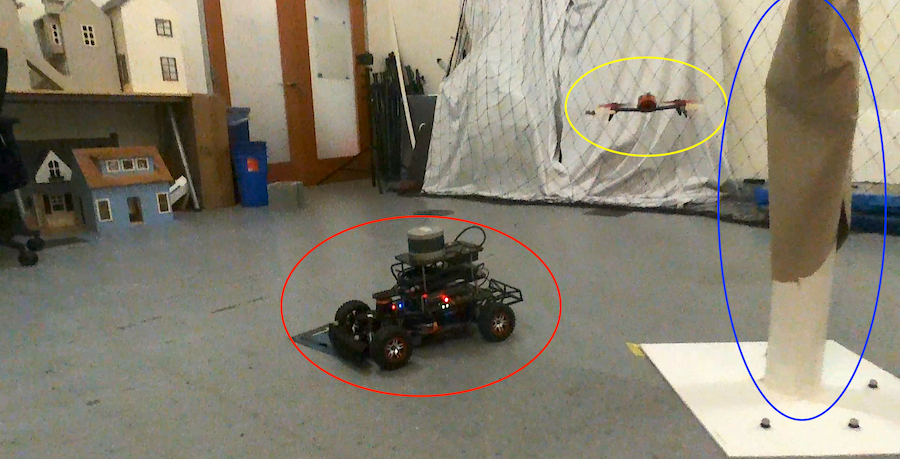}

    \caption{Viewpoint from a videography drone looking at a target (red). Another drone (yellow) optimizes the capture of a different shot while avoiding occlusions from obstacles (blue).}
    \label{fig:on_drone}
    \vspace{-.6cm}
\end{figure}

We present a two-stage viewpoint optimization pipeline that enables a team of drones to capture a series of shots that match desired aesthetic qualities. A high-level planner uses a visibility heuristic and an Integer Linear Program (ILP) optimization routine to choose when each drone should capture which shot. This assignment results in a reference trajectory that can be tracked by an online Model Predictive Control (MPC) algorithm with a cost function based on the specified viewpoint parameters. The controller can locally optimize the drones' trajectories to account for stochastic target motion. We demonstrate a videography scenario with a pair of
drones assigned to capture several shots of a remote controlled racecar in the presence of ellipsoidal
obstacles. 

\textbf{Related work} A large body of literature exists related to defining shot aesthetics for a videography drone \cite{mademlis_autonomous_2019-1}. These works explore parameterizing desired shot qualities and controls \cite{galvane_directing_2018}\cite{christie2008}\cite{gleicher1992}\cite{lino_intuitive_2015}, feasibility of dynamic shots \cite{karakostas_shot_2019}, and assigning sequences of shots \cite{mademlis_high-level_2019}\cite{Drucker94intelligentcamera}. Other work as focused on algorithmic frameworks for helping directors achieve desirable aesthetic qualities of their shots\cite{lino2011}\cite{jiang2020}.

Drone videography requires precise motion planning and control algorithms to achieve the defined aesthetic objectives. One common approach solves a constrained nonlinear optimization in a receding horizon fashion \cite{sabetghadam_optimal_2019, nageli_real-time_2017, nageli_real-time_2017-1}. Other approaches optimize for trajectory smoothness \cite{gebhardt_optimizing_2018}, focus on a series of static landmarks \cite{xie_creating_2018}, or use deep reinforcement learning \cite{bonatti_autonomous_nodate}. More generally, the problem of tracking multiple subjects is similar to persistent monitoring \cite{Alamdari2014, Tokekar2015};  patrolling and surveillance \cite{Pippin2013, Robin2016, Khan2018, deMoraes2020}; and pursuer-evader games \cite{Vieira2009, PiersonEtAl2017_IRaAL, ZhouEtAl2020_SJoCaO} .

\textbf{Contributions} We build upon the authors' previous work in \cite{nageli_real-time_2017} and focus on the problem of optimizing sequences of shots from multiple cameras and perspectives subject to constraints. In contrast to \cite{nageli_real-time_2017} and other recent work \cite{bonatti_autonomous_nodate}\cite{bucker2020} that has focused on reactively finding good viewpoints in unstructured scenes, we assume that the operational environment is known and the subject's motion can be estimated ahead of time. In this context, a director may have a set of desired shots that should be captured during the scene. It is nontrivial to decide when each shot should be taken so that the shots are minimally obstructed and the ordering is feasible for the drones. A higher-level global planner is used to augment the online local planner and ensure the desired viewpoints can all be captured. The main contributions of the paper are:
\begin{itemize}
\item Presenting a novel high-level videography planner based on a visibility heuristic to globally optimize ordering of shots of a dynamic target
\item Showing that the reference trajectory generated by this assignment can be followed by a viewpoint-aware receding horizon controller to locally optimize shot aesthetics
\item Demonstrating these algorithms in hardware experiments with a pair of drones and remote control racecar videography target
\end{itemize}
The remainder of this paper is organized as follows: we briefly summarize shot aesthetic specifications and formalize the minimization of
the viewpoint costs in Section \ref{sec:vid}. Section \ref{sec:assignment} presents the high-level
global planner which generates a sequence of shots and reference trajectory to guide each drone. In
Section \ref{sec:mpc} we explain the videography MPC formulation used for local viewpoint optimization. Section
\ref{sec:experiments} describes our experimental results.

\section{Problem Definition}\label{sec:vid}
When a predictable subject is operating in a known environment, we would like to be able to coordinate a set of desired shots such that each shot is captured with minimal occlusion. For example, repeated takes on a movie set or a well-trodden mountain bike path are scenarios where the subject's motion is fairly restricted and can be estimated ahead of time, even if the exact path is not known. Existing videography approaches focus on reactive planning to maintain good views of a target. We are more interested in global planning that ensures a team of videography drones sequences the set of desired shots to ensure they are all fit in within the allotted time and with minimal obstruction.
\subsection{Preliminaries}

Throughout this paper vectors are denoted in bold lowercase letters, $\mathbf{w}$, matrices in plain
uppercase, $M$, and sets in calligraphic, $\mathcal{S}$. $\norm{\mathbf{w}}$ denotes the Euclidean norm of
$\mathbf{x}$ and $\norm{\mathbf{w}}_{Q}^2 = \mathbf{w}^TQ\mathbf{w}$ denotes the weighted squared norm. We use matrices $Q_{\_\_}$ that appear in these weighted norms are used to represent tuneable parameters in cost functions that allow the user to express the relative importance of different desiderata. 

We assume that $n_d$ videography drones operate in an environment with known set of obstacles $\mathcal{O}$. While the MPC formulation we use requires obstacles to be represented as ellipsoids, our main contribution of the higher-level planner supports any obstacle set representation that allows for computing ray intersections efficiently. 

We also assume access to a probability distribution over the videography subject's future trajectory, denoted $\mathbb{Y}$. We sample from this distribution to predict good times for shots to be taken.

\subsection{Defining Desired Shots}

Much of the defining aesthetic of a camera framing is determined by:

\begin{enumerate}
    \item \textbf{Orientation} of the camera relative to the film subject, denoted by azimuth and elevation, $(\psi, \theta)$,
    \item \textbf{Subject's Distance} to the camera, $\rho$,
    \item \textbf{Subject's Position} in the frame, $(C_x, C_y)$.
\end{enumerate}
as presented in \cite{nageli_real-time_2017}. We also note that it is usually desirable for the videography subject not to be occluded in the image.

In addition to the spatial viewpoint parameters, we consider a time window within which a shot should be
taken, $[t_0, t_f]$, and a desired duration $\tau^i$. 
\begin{define}[Shot Specification]
A shot specification $S_i = (\theta^i, \psi^i, \rho^i, C_x^i, C_y^i, t_0^i, t_f^i, \tau^i)$ is set of viewpoint parameters and timing constraints define each shot. We denote the set of desired shot specifications $\mathcal{S}$. 
\end{define}
\begin{figure}
\centering
    \includegraphics[width=.7\columnwidth]{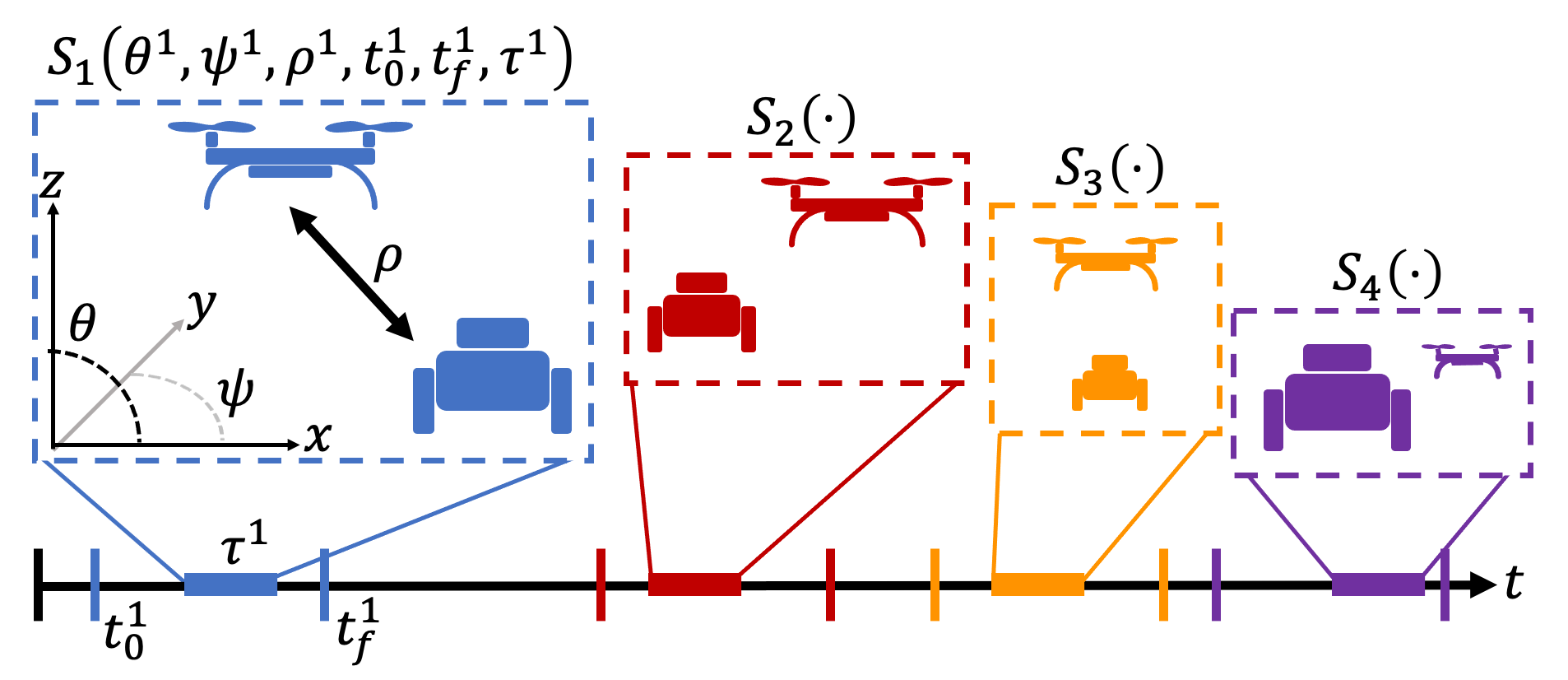}
    \caption{\label{fig:shot_schematic}Illustration of the shot schematic. Within each shot, we optimize based on viewpoint costs. We also optimize the shot assignment for the sequence of shots between drones.}
    \vspace{-.6cm}
\end{figure}
Figure \ref{fig:shot_schematic} illustrates these parameters. The viewpoint properties depend on the drone state $\mathbf{x}_b$, target state
$\mathbf{x}_t$, and obstacles in the environment $\mathcal{O}$. We briefly define a cost function associated with each parameter, and refer to \cite{nageli_real-time_2017} for derivation. 
Let $\mathbf{r}_{ct}$ and $\mathbf{r}_{ct}^c$ denote the vector from the camera origin to the target expressed in the world and camera reference frames respectively. Let $\mathbf{r}_d^c$ denote the desired vector from the camera origin to target as defined by image position parameters $C_x$ and $C_y$. Let $\bm{\alpha}_d$ represent the unit vector corresponding to the desired angles $(\theta, \psi)$ for the drone relative to the target. With these definitions, we can define cost functions that penalize a drone's state as it deviates from the desired viewpoint.
Let $c_{image}$ denote the cost for the target's position in the image:
\begin{equation*}
    c_{image} = \norm{\frac{\mathbf r_d^c}{\norm{\mathbf r_d^c}} - \hat{\mathbf r}_{ct}^c}_{Q_i}.
\end{equation*}
The cost $c_{scale}$ penalizes distance deviation from the target,
\begin{equation*}
    c_{scale} = \norm{\norm{\mathbf r_{ct}} - \rho}_{Q_s},
\end{equation*}
and $c_{angle}$ is a cost that depends on the deviation of the drone's angles relative to the target:
\begin{equation*}
    c_{angle} = \norm{-\frac{\mathbf r_{ct}}{\norm{\mathbf r_{ct}}} - \frac{\mathbf
\alpha_d}{\norm{\mathbf \alpha_d}}}_{Q_a}.
\end{equation*}

Finally, $c_{occlusion}$ penalizes drone positions that have an obstructed view of the target. This cost is based on the position of the target compared to the occlusion cone associated with the camera and each obstacle. We define a cost based on $d_v$, the distance by which the target is within an obstacle's occlusion cone. Let $\mathbf r_{co}^o$ represent the vector from the camera to an obstacle in the obstacle's reference frame. We define $c_{occlusion} = \max(0, d_v)^2$ where
\begin{equation*}
    d_v = \frac{\mathbf r_{co}^{o^T}\mathbf r_{ct}^o}{\sqrt{\norm{\mathbf r_{co}^o}^4 - \norm{\mathbf r_{co}^o}^2}} - \sqrt{\norm{\mathbf r_{ct}^o}^2 -
\frac{(\mathbf r_{co}^{o^T}\mathbf r_{ct}^o)^2}{\norm{\mathbf{r}_{ct}^o}^2}},
\end{equation*}
if $\mathbf r_{ct}^{o^T}\mathbf r_{co}^o/\norm{\mathbf{r}_{co}^o} > \norm{\mathbf r_{co}^o} - 1$.  Otherwise, the target is closer to the camera than the obstacle is and $c_{occlusion} = 0$. Each obstacle contributes a separate occlusion cost. A similar visibility cone concept can be used to penalize mutual visibility between drones as in \cite{nageli_real-time_2017-1}, although we do not employ that cost here.

\subsection{Defining Desired Sequences}

These costs $c$ aid in quantifying the quality of a single shot. Our viewpoint objective is to minimize these costs over a sequence of shots.

\begin{define}[Shot Sequence]\label{def:shot-sequence}
A shot sequence $A_i = (S_j^{t_a}, S_k^{t_b}, \hdots)$ is the set of all shots assigned to drone $i$. Each assigned shot $S_j^{t_a}$ corresponds to a shot specification $S_j$ and a time when the drone should be capturing that shot, $t_a$. $A_i^j$ denotes the $j$-th shot assigned to drone $i$. The set of all assigned shots across all drones and times is $\mathcal{A} = \cup_i A_i$.
\end{define}

For each shot, we define a vector of cost functions $\mathbf{J}_i^{j}$ that
penalizes drone $i$'s state from deviating from the parameters defined by shot $S_j$.
$\mathbf{J}_i^{j}$ contains penalties for deviating from the desired location of the target in the
image plane, its size in the image plane, and the camera position relative to the target.
$\mathbf{J}_i^{j}$ also penalizes drone states that have an obstructed view of the target. Intuitively, 
\begin{equation}
    \mathbf{J}_i^{j} = [c^i_{\tn{image}}, c^i_{\tn{scale}}, c^i_{\tn{angle}}, c^i_{\tn{occlusion}}]^T.
\end{equation}
Given the trajectory of the videography target, 
 $\mathbf{J}_{i}^{j}$ is a function of drone
$i$'s state and time. For a given set of shots $\mathcal{S}$, our goal is to then find the trajectories for all drones such that  $\mathbf{J}$ is minimized over all shots. 
\begin{problem}
For a desired set of shots $\mathcal{S}$ and a target trajectory $Y$, we define the total
videography cost $L$ as a function of the trajectories of the $n_d$ drones:
\begin{equation} \label{eq:optimization-full}
\begin{split}
    L(\mathbf{x}_1, \hdots, \mathbf{x}_{n_d})  = \sum_{j=0}^{|S|} \min_{t_0^j \leq t \leq t_f^j} \min_{i} \int_{t}^{t + \tau^j}
 \norm{\mathbf{J}_i^j(\mathbf{x}_i(t), t)}_{Q_x} dt.\\
\end{split}
\end{equation}
In general, $Y$ is stochastic, and by extension $\mathbf{J}^j_i$ is as well.
We seek a control policy for each drone that will minimize the expected videography cost $L$.
\end{problem}
We use a two stage approach to find good policies for minimizing $L$. First, a high-level planning
algorithm uses a simple heuristic to predict good times for capturing each shot. A discrete
optimization algorithm assigns each drone to a sequence of shots that minimizes the heuristic cost.
Next, the optimized assignment and expected target trajectory are used to generate a reference
trajectory for a videographic Model Predictive Control (MPC) algorithm that locally optimizes the
aesthetic parameters online.
\section{SHOT ASSIGNMENT}\label{sec:assignment}
The high-level shot planning is carried out by a centralized algorithm that finds a sequence of shots for each drone to capture, as well
as a reference trajectory to follow. The algorithm requires the desired set of shots $\mathcal{S}$,
a distribution over target trajectories $\mathbb{Y}$, and a set of obstacles in the environment
$\mathcal{O}$. It relies on a heuristic to choose a low-cost reference trajectory that guarantees
visibility of the target. The heuristic provides an estimate of the best times to start capturing
each shot. We sample the lowest-cost times for each shot and assign each drone a sequence of shots and times based on an ILP minimization.

\subsection{Assignment Heuristic}\label{sec:assignment-heuristic}

We seek a simple heuristic for each shot $i$ that maps time to a reference
position, while ensuring that the target is visible. We refer to the position given by this heuristic as $\hat{\mathbf{x}}^i(t)$, and $H^i(t)$ as the associated cost. By focusing on a single cost and position at each time, the problem of optimizing shot cost over all possible trajectories simplifies to choosing a sequence of shots and times for each drone. 

The target is
visible from the drone if and only if a ray cast from the target to the drone hits no obstacles
before reaching the drone. This observation inspires a heuristic for the reference shot position ---
cast a ray from the target's position $\mathbf{x}_t$ in the desired shot direction, stopping at the desired shot distance or
the intersection with an obstacle, whichever is first. Let $\vx^*(t) = \vx_t(t) +
\rho^i\bm{\alpha}$ and $\hat{\vx}(t) = \vx_t(t) + d\bm{\alpha}$
where
\begin{equation*}
\bm{\alpha} = [\cos\theta\cos(\psi + \psi_t), \cos\theta\sin(\psi + \psi_t), \sin\theta]^T,
\end{equation*}
and $d$ is the smaller of $\rho_i$ and the closest obstacle intersection along $\bm{\alpha}$. Intuitively, $\mathbf{x}^*$ is the drone position that would have zero viewpoint cost, and $\hat{\mathbf{x}}$ is as close as we can get to that position along the desired direction from the target, before hitting an obstacle. We similarly define $\tilde{\vx}(t)$ as a ray cast from $\mathbb{E}(\vx(t))$ in the expected direction of $\bm{\alpha}$. The path $\tilde{\mathbf{x}}$ will be used as the reference path to transition between shots.
As the reference position is constructed such that it satisfies the desired relative angles $\psi$ and $\theta$, $c_{scale}$ is the only nonzero viewpoint cost.  The reference trajectory is also guaranteed to have visibility of the target.
For shot $i$, we consider a heuristic videography cost $H_{vid}^i$ that is a function of only time:
\begin{equation*}
    H_{vid}^i(t) = \norm{\mathbf{x}^*(t) - \hat{\mathbf{x}}(t)}.
\end{equation*}
The position heuristic $\hat{\vx}$ does not enforce continuity of the trajectory, so we add an additional discontinuity
cost $H_{dis}$ for each shot:
\begin{equation*}
    H_{dis}^i = q_{dis}\norm{\frac{d\hat{\mathbf{x}}}{dt}}^2,
\end{equation*}
where $q_{dis}$ is a tuning parameter that adjusts the importance of ensuring the reference trajectory can be perfectly tracked.

Together, $H_{vid}$ and $H_{dis}$ sum to the heuristic for instantaneous cost, $H_{fine}$. Each shot
$S_i$ lasts for a duration $\tau^i$. We define a shot cost $H_{shot}^i$ that estimates
the cost incurred by capturing shot $S_i$ starting at time $t$ by setting $H_{shot}^i(t)$ to be the average value of $H_{fine}$ in the interval $[t, t + \tau_i]$.  
We take the expectation over the target's trajectory distribution and calculate it by sampling. 

We can now use this heuristic $H_{shot}$ to find good times to capture each shot by
finding an assignment of shots to drones such that the sum of $H_{shot}$ over all of the chosen shot
times is minimized and each drone can be expected to feasibly capture all of its assigned shots. The size of the optimization problem depends on how finely we sample $H_{shot}$. We reduce the number of times considered for starting each shot by
sampling $p$ random time indices $\mathbf{t}_{sampled}^i$ for each shot, biasing the samples toward lower-cost times. The sampling process greatly reduces the size of the problem.
\vspace{-.2cm}
\subsection{Choosing the Best Shots}

\begin{figure}
    \begin{subfigure}{\columnwidth}
    \centering
    \includegraphics[width=.65\columnwidth]{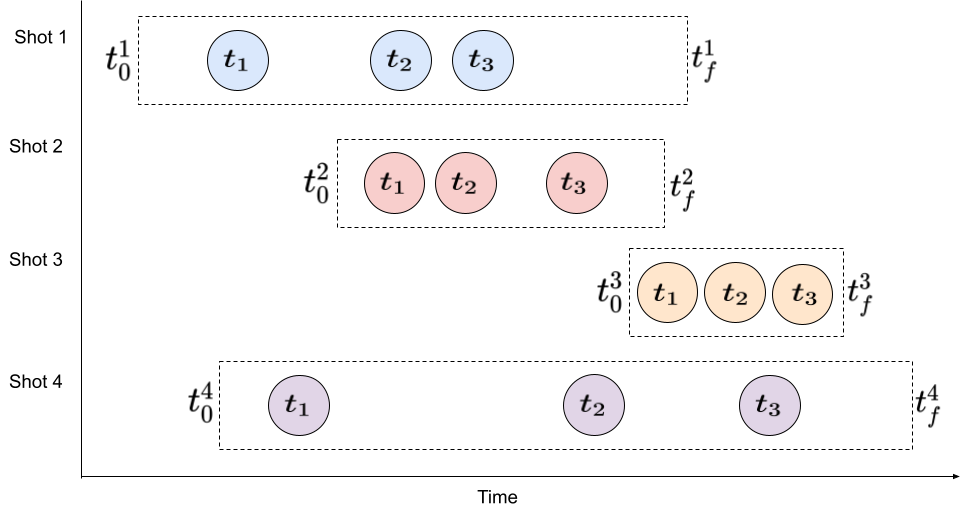}
    \caption{An example of shots four shots, $S_1,~S_2,~S_3,~S_4$. Three times with low heuristic cost $H_{shot}$ have been sampled for each shot, as discussed in Section \ref{sec:assignment-heuristic}. In reality, many more times are sampled. Our experiments use 20 samples per shot.}
    \end{subfigure}
    \begin{subfigure}{\columnwidth}
    \centering
    \includegraphics[width=.65\columnwidth]{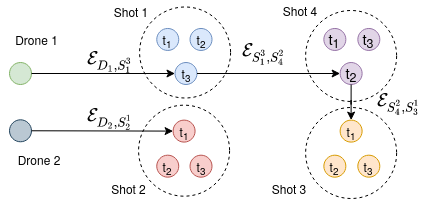}
    \caption{\label{fig:optimization_constraints} An example of a solution to the PDAG-Minimum-Paths problem for $m = 2$, where the edges not on a solution path have been omitted. Note that each color set is on exactly one path, and there are two paths. This diagram also exemplifies how a minimization in the form of \eqref{eq:assignment_min} maps to an instance of PDAG-Minimum-Paths. The edge labels $\mathcal{E}$ demonstrate the semantics of the construction of $\mathcal{G}_{shot}$. Note that the three times chosen for Drone 1's assignment are in ascending order in (a).}
    \end{subfigure}
    \caption{An illustrative example of how (a) a set of desired shots are considered at discrete times and (b) turned into a graphical optimization problem.} 
    \label{fig:optimization_constraints_full}
    \vspace{-.7cm}
\end{figure}

An assignment of drones to shots can now be computed by finding an assignment that minimizes the
total heuristic cost incurred. Recall from Definition \ref{def:shot-sequence} that $S_j^{t_a}$ denotes the shot $j$ that has been assigned to begin at time $t_a$, $A_i$ denotes the sequence of shots that have been assigned to drone $i$, and  $\mathcal{A} = \cup_i A_i$ is the set of all assigned shots and times. We
must search for an assignment $\mathcal{A}$ that minimizes the total sum of shot costs, while ensuring that
each drone has enough time to transition between the shots it has been assigned. 

The distance between shots is not deterministic, as it depends on the target's stochastic
trajectory. Moreover, even
if the drone cannot fully complete the transition between shots in the allotted time, its actual
position at the beginning of the next shot may be as good as the desired reference position
$\hat{\mathbf{x}}(t)$, especially if it can get close to $\hat{\mathbf{x}}$. Instead of hard
transition feasibility constraints, we introduce a cost function that penalizes transitions that may be dynamically infeasible. For a pair of shot assignments $S_j^{t_a}, S_k^{t_b}$, the transition cost $H_{trans}$ is defined as
\begin{equation*}
\begin{split}
H_{trans}(S_j^{t_a},& S_k^{t_b}) = \\&\mathbb{E}(\norm{[\max(0, d_{trans} - d_{max}), d_{trans}]^T}_{Q_{t}}),
\end{split}
\end{equation*}
where $d_{max}$ is the maximum distance the drone can travel between time $t_a$ and $t_b$ and
$d_{trans}$ is the distance required to transition between the end of shot $S_j^{t_a}$ and the
beginning of shot $S_k^{t_b}$.
The weighting $Q_t$ is set such that $H_{trans}$ incurs a very large penalty when the transition distance between shots is expected to be too long (i.e. $d_{trans} > d_{max}$), and a smaller penalty based on $d_{trans}$ which encourages assignments that result in the drones traveling shorter distances.

In addition to this cost term, we require that if one shot precedes another in the assignment
ordering, the first shot must end before the second begins. We define slightly different notation
for the heuristic cost $H_{shot}$ to refer to the cost of an assignment rather than the cost
of a shot. For assignment $A_i^j = S_k^{t_a}$, we notate the assignment cost as $H(A_i^j) = H^k(t_a)$. We want to solve the
following minimization:
\begin{subequations}
\label{eq:assignment_min}
\begin{equation}
\begin{split}
& \mathcal{A}^*  = \\ & \argmin_\mathcal{A} \sum_{i=1}^{n_d} \bigg( \sum_{j=1}^{|A_i|} H_{shot}(A_i^j)
    + \sum_{j=2}^{|A_i|} H_{trans}(A_i^{j-1}, A_i^j)\bigg)
    \label{eq:assignment_min:costs}
\end{split}
\end{equation}
\begin{alignat}{2}
\text{subject to} ~~     & \forall s \in S,~~s \in \mathcal{A}, \label{eq:assignment_min:all_shots}\\
& |S| = |\mathcal{A}|,\label{eq:assignment_min:no_extra}\\
& \forall S_j^t \in A,~~t \in \mathbf{t}_{sampled}^j,
\label{eq:assignment_min:sampled_times}\\
& \forall (S_j^{t_a}, S_{k}^{t_b}) \in A_i,~~t_b \geq t_a + \tau^j.
\label{eq:assignment_min:partial}
\end{alignat}
\end{subequations}

Constraints \eqref{eq:assignment_min:all_shots} and \eqref{eq:assignment_min:no_extra} ensure that
each shot appears exactly once in the ordering. Constraints \eqref{eq:assignment_min:sampled_times} and \eqref{eq:assignment_min:partial} force the chosen times to be from $\mathbf{t}_{sampled}$ and satisfy the temporal consistency previously discussed. 

The cost associated with \eqref{eq:assignment_min} acts like an upper bound on \eqref{eq:optimization-full}. For each shot, the assignment associates a time window, a drone, and a heuristic reference path that minimizes most of the constituent costs in $\mathbf{J}$ (the exception being $c_{scale}$). The cost is not truly an upper bound because $\hat{\mathbf{x}}$ may not result in a feasible trajectory, but it provides intuition for why we expect minimizing the cost in \eqref{eq:assignment_min} will provide a good reference trajectory for minimizing the final videography cost from \eqref{eq:optimization-full}. \vspace{-.1cm} 
\subsection{Constructing a Graph Formulation}

To find a minimizing solution to \eqref{eq:assignment_min}, we draw inspiration from minimum-cost flow assignment methods \cite{network_flows}. While this assignment problem cannot actually be solved with minimum-cost flow solvers, encoding it in a graph yields a straightforward interpretation as a constrained minimum-cost path problem in a Partitioned Directed Acyclic Graph (PDAG), which can be solved with an ILP.

\begin{define}[PDAG]\label{def:pdag}
A Partitioned Directed Acyclic Graph (PDAG) is a Directed Acyclic Graph $\mathcal{G}$, such that each vertex is assigned one of $k$ labels.
\end{define}
For this purposes of illustration, we will refer to these labels of the PDAG by $k$ different colors. We now define Problem \ref{prog:pdag_paths} on finding a set of minimum-cost paths through the PDAG. 
\begin{problem}[PDAG-Minimum-Paths]\label{prog:pdag_paths}
Consider a weighted PDAG, $\mathcal{G}$. Given a maximum number of paths $m$, PDAG-Minimum-Paths(m,
$\mathcal{G})$ is the problem of finding the lowest-weight set of paths $\mathcal{P}$ such that $|\mathcal{P}| \leq m$ and exactly one vertex of each color is on some path or determining that there is no such satisfying $\mathcal{P}$.
\end{problem}
\vspace{-.1cm}
Figure \ref{fig:optimization_constraints_full} demonstrates how a solution to a PDAG-Minimum-Paths problem with $m=2$ encodes the shot assignment problem. By design, a path must visit each of the graph partitions, which means that it passes through every color label of the graph. We now discuss how to construct a PDAG that encodes the constraints of  \eqref{eq:assignment_min}.

Note that \eqref{eq:assignment_min:partial} implies that the assignment of shots must
satisfy a strict partial ordering that ensures the sequence is monotonically increasing in time and
has no shot overlaps assigned to a single drone. 
We use this partial order to generate a DAG $\mathcal{G}_{shot}$, where
vertices represent each $S_i^t$, for $t \in \mathbf{t}^i_{sampled}$. This selection of vertices enforces \eqref{eq:assignment_min:sampled_times}. We choose the color label of each vertex $S_i^t \in \mathcal{G}_{shot}$ to be its shot index $i$. We also add a vertex in
$\mathcal{G}_{shot}$ for each of $n_d$ drones that can be assigned a shot. These drone vertices are
connected to all existing vertices in $\mathcal{G}_{shot}$ and each have a unique color.
The directed edge between $S_j^{t_a}$ and $S_k^{t_b}$ (if it exists) is denoted
$\mathcal{E}_{S_j^{t_a}S_k^{t_b}}$ with cost given by 
\begin{equation}\label{eq:gshot_cost_1}
H(\mathcal{E}_{S_j^{t_a}S_k^{t_b}}) = H_{shot}(S_k^{t_b}) +
H_{trans}(S_j^{t_a}, S_k^{t_b}).
\end{equation}
The cost for edges connected to the drone nodes is denoted
\begin{equation}\label{eq:gshot_cost_2}
H(\mathcal{E}_{D_i, S_k^{t_b}}) = H_{shot}(S_k^{t_b}).
\end{equation}
We now present Proposition \ref{prop:pdag-equiv}, which connects $\mathcal{G}_{shot}$ to  \eqref{eq:assignment_min}.
\begin{prop}\label{prop:pdag-equiv}
~\newline
Solutions to PDAG-Minimum-Paths($n_d, G_{shot}$) also minimize \eqref{eq:assignment_min}.
\end{prop}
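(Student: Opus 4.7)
\emph{Proof plan.} My strategy is to exhibit a cost-preserving bijection between feasible assignments $\mathcal{A}$ for \eqref{eq:assignment_min} and valid path collections $\mathcal{P}$ for PDAG-Minimum-Paths$(n_d, \mathcal{G}_{shot})$. Once the bijection is established and shown to preserve objective value, the two minimization problems have identical sets of attainable costs, so any minimizer of one corresponds to a minimizer of the other.

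For the forward direction, given a feasible $\mathcal{A}$ with per-drone sequences $A_i = (S_{j_1}^{t_1}, \ldots, S_{j_{k_i}}^{t_{k_i}})$, I construct the path $P_i = D_i \to S_{j_1}^{t_1} \to \cdots \to S_{j_{k_i}}^{t_{k_i}}$ for each drone $i$, collapsing to the singleton $D_i$ when $A_i$ is empty. Constraint \eqref{eq:assignment_min:partial} guarantees that each required inter-shot edge exists in $\mathcal{G}_{shot}$, since that DAG was constructed precisely from this temporal partial order, and the drone-to-shot edges exist by construction. Constraint \eqref{eq:assignment_min:sampled_times} ensures each chosen vertex is present in $\mathcal{G}_{shot}$, while \eqref{eq:assignment_min:all_shots}--\eqref{eq:assignment_min:no_extra} translate directly into the requirement that each shot color label appears on exactly one path, with each drone color hit by its own $P_i$. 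Thus $\mathcal{P} = \{P_1, \ldots, P_{n_d}\}$ is feasible for PDAG-Minimum-Paths.

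For the reverse direction, I argue that any feasible $\mathcal{P}$ admits paths that each start at a drone vertex. By construction drone vertices have no incoming edges and are mutually disconnected, yet the color-cover requirement forces every one of them to lie on some path, which can only occur as the initial vertex. Reading off the shot vertices after $D_i$ along its path yields a sequence $A_i$ for which \eqref{eq:assignment_min:sampled_times} is inherited from the vertex set and \eqref{eq:assignment_min:partial} from the edge set of $\mathcal{G}_{shot}$; the color-cover condition then supplies \eqref{eq:assignment_min:all_shots}--\eqref{eq:assignment_min:no_extra}. Summing the edge weights in \eqref{eq:gshot_cost_1}--\eqref{eq:gshot_cost_2} along all paths telescopes to $\sum_i \bigl(\sum_j H_{shot}(A_i^j) + \sum_{j\geq 2} H_{trans}(A_i^{j-1}, A_i^j)\bigr)$, matching the objective in \eqref{eq:assignment_min:costs} exactly.

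The main obstacle is the boundary bookkeeping on the initial edges: the first edge out of $D_i$ must contribute only an $H_{shot}$ term for the first shot of $A_i$, with no spurious $H_{trans}$ contribution, and each subsequent edge must contribute both the next $H_{shot}$ and the incurred $H_{trans}$. The split definitions \eqref{eq:gshot_cost_1} and \eqref{eq:gshot_cost_2} are designed for exactly this alignment, so once that is verified the telescoping sum is immediate. A secondary subtlety is ensuring no drone vertex can be orphaned off every path, which follows from the directed, partitioned structure of $\mathcal{G}_{shot}$ together with the fact that drone vertices form their own singleton color classes.
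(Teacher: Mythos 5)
Your proof is correct and takes essentially the same route as the paper: both establish a cost- and constraint-preserving correspondence between feasible assignments $\mathcal{A}$ and valid path collections in $\mathcal{G}_{shot}$, with colors encoding the exactly-once shot constraints, edges encoding the temporal ordering, and edge weights \eqref{eq:gshot_cost_1}--\eqref{eq:gshot_cost_2} reproducing the objective \eqref{eq:assignment_min:costs}. You are somewhat more careful than the paper in spelling out the reverse direction (paths must begin at drone vertices) and the boundary bookkeeping of the telescoping cost sum, but this is elaboration of the same argument rather than a different one.
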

\begin{proof}
We will show that the constraints and costs in \eqref{eq:assignment_min} are equivalent to the constraints and costs of solutions to PDAG-Minimum-Paths($n_d, G_{shot}$) (PMP). Let each shot $s \in \mathcal{S}$ correspond to a color in $G_{shot}$. If we consider the set of assigned shots $\mathcal{A}$ as corresponding to the set of vertices in $G_{shot}$ that are on paths selected by PMP, then enforcing each shot $s \in S$ to appear in the assignment A as in \eqref{eq:assignment_min:all_shots} is equivalent to enforcing that every color appears on the solution to PMP. Enforcing $|\mathcal{S}| = |\mathcal{A}|$ as in \eqref{eq:assignment_min:no_extra} is then equivalent to restricting each color to appear at most once on the PMP solution. The vertices of $G_{shot}$ are chosen by construction to respect constraint \eqref{eq:assignment_min:sampled_times}. The connectivity of $G_{shot}$ was defined by the partial ordering implied by \eqref{eq:assignment_min:partial}, so every solution to PMP respect this ordering constraint. Each path in the solution of PMP corresponds to the assignment ordering $A_i$ for a drone. The cost of the path consists of the shot cost connected to each vertex and the transition cost connected to each edge, as in \eqref{eq:gshot_cost_1} and \eqref{eq:gshot_cost_2}, the same expressions used to express the costs of assignments $A_i$ in \eqref{eq:assignment_min:costs}. As the constraints and costs are equivalent, the solution to PDAG-Minimum-Paths($n_d, G_{shot}$) also minimizes \eqref{eq:assignment_min}. 
\end{proof}

\subsection{Solving PDAG-Minimum-Paths}\label{sec:ilp}

We now formulate PDAG-Minimum-Paths as an ILP. Consider a weighted PDAG $\mathcal{G}
= (\mathcal{V}, \mathcal{E})$. To solve PDAG-Minimum-Paths($n_d, \mathcal{G}$), we first construct a
modified graph $\mathcal{G'} = (\mathcal{V}', \mathcal{E}')$ which is the same as $\mathcal{G}$, but it
contains an extra $n_d$ vertices. Each of the extra vertices is connected to all of the original
vertices, and they each have a unique color. These additional vertices are denoted
$\hat{\mathcal{V}}$.\footnote{Note that in the case of the drone assignment problem, the ``drone''
vertices that were added can be considered as $\hat{\mathcal{V}}$.}

We construct an ILP that contains a variable $x_i$ for each edge $e_i \in \mathcal{E}'$. The cost of each variable $C_x$ is the same as the cost of its corresponding edge.
Let $\hat{\mathcal{X}}$ denote the set of variables corresponding to edges connected to
$\hat{\mathcal{V}}$. Let $\mathcal{X}_{*,i}$ denote the set of variables corresponding to edges
incident on $V_i$. Let $\mathcal{X}_{i,*}$ denote the set of variables corresponding to edges
directing away from $V_i$. Let $\mathcal{X}^k$ denote the set of variables corresponding to edges
incident on color $k$. The ILP solving PDAG-Minimum-Paths($D, \mathcal{G}$) can be defined as:
\begin{subequations}
\label{eq:ip_formulation}
\begin{alignat}{2}
    \mathcal{X}^* = &\argmin_{\mathcal{X}} \sum_{x \in \mathcal{X}} xC_x\\
\text{s.t.} ~~ &\forall x \in \mathcal{X}, x \in \{0, 1\},
\label{eq:ip_formulation:binary}\\
&\forall k,\  \sum\nolimits_{i = 1}^{|\mathcal{X}^k|} \sum\nolimits_{x \in \mathcal{X}^k_{*, i}} x = 1,
\label{eq:ip_formulation:incoming_constraint}\\
&\forall i,\  \sum\nolimits_{x \in \mathcal{X}_{i,*}} x \leq \sum\nolimits_{x \in \mathcal{X}_{*,i}} x
\label{eq:ip_formulation:consistency_constraint}\\
&\forall i,\  \sum\nolimits_{x \in \hat{\mathcal{X}}_{i,*}} x \leq 1.
\label{eq:ip_formulation:drone_constraint}
\end{alignat}
\end{subequations}

The edges corresponding to variables with a value of one in $\mathcal{X}^* \setminus \hat{\mathcal{X}}$ are considered to be the set of edges in the solution to PDAG-Minimum-Paths. If there is no satisfying assignment for $\mathcal{X}^*$, then there is no solution to PDAG-Minimum-Paths.

\begin{prop} The 
Integer Linear Program in \eqref{eq:ip_formulation} generates a solution PDAG-Minimum-Paths, which minimizes \eqref{eq:assignment_min}.
\end{prop}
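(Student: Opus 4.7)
The plan is to build on Proposition \ref{prop:pdag-equiv}, which already establishes that any solution to PDAG-Minimum-Paths($n_d, G_{shot}$) minimizes \eqref{eq:assignment_min}. It therefore suffices to show that the optimal ILP solution in \eqref{eq:ip_formulation} corresponds to an optimal solution of PDAG-Minimum-Paths($n_d, \mathcal{G}$) for any weighted PDAG $\mathcal{G}$. I would split the argument into matching (i) the objective and (ii) the feasible set of the two problems, and then invoke Proposition \ref{prop:pdag-equiv} to conclude.

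The objective match is routine: the variable $x_i$ indicates whether edge $e_i$ is selected and its coefficient $C_{x_i}$ equals the edge weight, so $\sum_{x} x C_x$ equals the total weight of the selected edge set, which in turn equals the total path weight of any path collection that the support of $\mathcal{X}$ encodes.

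For the feasibility correspondence I would walk through the constraints in turn. Constraint \eqref{eq:ip_formulation:binary} enforces integrality of edge selection. Constraint \eqref{eq:ip_formulation:incoming_constraint} forces exactly one selected incoming edge across all vertices of each color, which (a) guarantees that each color is represented on the selected edge set by exactly one vertex, matching the color-covering requirement of PDAG-Minimum-Paths, and (b), since each vertex carries a single color, implies any individual vertex has at most one selected incoming edge. Combined with the flow consistency \eqref{eq:ip_formulation:consistency_constraint}, this gives each vertex at most one selected outgoing edge as well. These per-vertex degree bounds, together with the acyclicity of $\mathcal{G}'$, force the selected edges to decompose into a disjoint union of simple paths rooted at sources, which can only be vertices in $\hat{\mathcal{V}}$; constraint \eqref{eq:ip_formulation:drone_constraint} then caps each such root at one outgoing edge, bounding the number of paths by $n_d = |\hat{\mathcal{V}}|$.

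The main obstacle I anticipate is making this structural decomposition argument rigorous: one must show that every feasible $\mathcal{X}$ encodes a valid collection of at most $n_d$ simple paths covering each color exactly once, rather than some more general flow pattern or disconnected edge subset. I would handle this by induction along a topological order of $\mathcal{G}'$, using the per-vertex degree bounds and the source-only property of $\hat{\mathcal{V}}$ to extend partial paths forward at each step, with the DAG property ruling out cycles. Once this bijection between feasible ILP points and valid PDAG-Minimum-Paths solutions is established, the two objectives coincide pointwise, so minimizing the ILP minimizes the PDAG-Minimum-Paths cost, and combining with Proposition \ref{prop:pdag-equiv} yields the desired claim.
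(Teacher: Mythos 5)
Your proposal is correct and follows essentially the same route as the paper's proof: match the ILP objective to total path weight, argue constraint by constraint that \eqref{eq:ip_formulation:incoming_constraint} enforces the exactly-one-vertex-per-color condition while \eqref{eq:ip_formulation:consistency_constraint} and \eqref{eq:ip_formulation:drone_constraint} force the selected edges to decompose into at most $n_d$ paths rooted in $\hat{\mathcal{V}}$, and then invoke Proposition \ref{prop:pdag-equiv}. Your version is actually more careful than the paper's --- the explicit per-vertex degree bounds and the topological-order induction rigorously establish the path-decomposition claim that the paper merely asserts --- so there is no gap to report.
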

\begin{proof}
Constraint \eqref{eq:ip_formulation:incoming_constraint} ensures that the total number of edges
incident on a color set is equal to one. This satisfies that all colors are part of some path.
Constraints \eqref{eq:ip_formulation:consistency_constraint} and
\eqref{eq:ip_formulation:drone_constraint} restrict the solution set to contain at most $D$ paths, by
limiting the number of selected edges attached to $\hat{\mathcal{V}}$ to $D$ and requiring that all
other edges start at a node with an incident edge. Thus, every solution of \eqref{eq:ip_formulation} encodes a valid set of paths in PDAG-Minimum-Paths. By the construction of \eqref{eq:ip_formulation}, every PDAG-Minimum-Paths problem can be represented by the ILP. Thus, \eqref{eq:ip_formulation} solves PDAG-Minimum-Paths, and by Proposition \ref{prop:pdag-equiv}, it minimizes \eqref{eq:assignment_min} as well.
\end{proof}
The size of the ILP necessary to solve the drone assignment depends on the number of shots $|\mathcal{S}|$, the number of samples we choose per shot $p$, and the number of drones, $n_d$. The ILP has variables corresponding to transitions between the $O(|\mathcal{S}|p)$ sampled times of the set of shots. For each drone, there is also a variable associated with each sampled time. This results in a ILP with $O((|\mathcal{S}|p)^2 + |\mathcal{S}|pn_d)$ variables. There area total of $2|\mathcal{S}|p + |\mathcal{S}|$ constraints. We find in practice that the number of shots we can define is limited by the horizon over which we can make predictions of the target's motion rather than by computational burden of the assignment algorithm.

The full high-level shot assignment pipeline is summarized in Algorithm \ref{alg:videography}.

\begin{algorithm}
\caption{Videography Assignment}\label{alg:videography}
\begin{algorithmic}[1]
\Procedure{AssignShots}{$\mathcal{S}, \mathbb{Y}, \mathcal{O}$}
\For{$i = 1 : |\mathcal{S}|$} \label{alg:videography:loop1}
\State $H_{\text{shot}}^i \gets \text{CalculateShotHeuristic}(\mathbb{Y},\mathcal{O}, S_i)$
\label{alg:videography:cshot}
\State $\mathbf{t}_{\text{sampled}}^i \gets \text{ImportanceSample}(H_{\text{shot}}^i)$
\label{alg:videography:csample}
\EndFor
\For {\textbf{each }$ (S_i^{t_a}, S_j^{t_b}) \in$ PotentialShotTransitions} \label{alg:videography:loop2}
\State $H_{\text{trans}}(S_i^{t_a}, S_j^{t_b}) \gets \mathbb{E}(\text{TransitCost}(S_i^{t_a},
S_k^{t_b}, \mathbb{Y}))$\label{alg:videography:ctrans}
\EndFor
\State $\mathcal{A}^* \gets \text{IntegerProgram}(t_{\text{sample}}, H_{\text{sample}}, H_{\text{trans}}, \mathcal{S})$
\label{alg:videography:ip}
\State \textbf{return} $\mathcal{A}^*$
\EndProcedure
\end{algorithmic}
\end{algorithm}
\vspace{-.4cm}
\section{ONLINE VIEWPOINT OPTIMIZATION}\label{sec:mpc}
The assignment $\mathcal{A}^*$ from \eqref{eq:assignment_min} is used to guide the online MPC. For each shot assignment $S_j^{t_a} \in \mathcal{A}_i^*$, drone $i$ will attempt to minimize the viewpoint cost for shot $j$ in the range $[t_a, t_a + \tau^j]$. In the intermediate time between successive assignments $S_j^{t_a}$ and $S_k^{t_b}$, the MPC follows a reference path that leads it from the expected end of one shot to the expected beginning of another, $\tilde{\vx}(t_a + \tau^j)$ to $\tilde{\vx}(t_b)$. We precompute a Probabilistic Roadmap (PRM)\cite{kavraki1996probabilistic} of free space in the environment, and use it to compute the reference path between $\tilde{\mathbf{x}}(t_a + \tau^j)$ and $\tilde{\mathbf{x}}(t_b)$.
During each shot period, the MPC locally optimizes the viewpoint costs. The MPC solves for drone trajectories and gimbal controls in real-time separately for each drone. The rest of the
MPC formulation is materially the same as the method presented in \cite{nageli_real-time_2017}, but we summarize it here for completeness. For speed considerations and analysis of the nonlinear videography MPC performance, we refer to \cite{nageli_real-time_2017}.
\vspace{-.1cm}
\subsection{Drone and Camera Model}
We assume the drone's motion model
follows some nonlinear differential equation $\dot{\mathbf{x}} = f(\vx, \vu)$, with $\vx(0) = \vx_0$ and 
 $\vx^k \in \mathcal{X} \subset \mathbb{R}^{n_x}$ denotes the state of the drone and $\vu^k \in
\mathcal{U} \subset \mathbb{R}^{n_u}$ the control inputs at time step $k$. $\mathcal{X}$ and
$\mathcal{U}$ are the admissible state space and control space, respectively. While $\mathbf{x}$ has been used in previous sections to denote only position, here it represents the full state. $\vx_0$ is the initial
state of the drone. Our experiments use the Parrot Bebop 2 quadrotor with its integrated front camera and
electronic gimbal. It accepts desired roll and pitch angles, yaw rate,
vertical velocity, and gimbal angles as input.
\vspace{-.1cm}
\subsection{MPC Formulation}

For each drone in the team, we formulate a receding horizon constrained optimization problem with
$N$ time steps and planning horizon $N\Delta t$, where $\Delta t$ is the sampling time. The
optimization seeks to minimize the weighted squared norm of a cost vector $\norm{\hat{\mathbf J}}^2_{Q}$.
Recall that $\mathbf{J}_i^j$ denotes the instantaneous viewpoint cost incurred by drone $i$ capturing shot $j$. As desired viewpoint and drone are unambiguous in the MPC formulation, we drop the sub- and superscripts in this section.
We augment $\mathbf{J}$ to include the control inputs, distance from reference state $\vx_{ref}$, and collision avoidance slack variables $\mathbf{S}$:
\begin{equation*}
\hat{\mathbf{J}}^k = [\mathbf{J}^T, \vu^T, \norm{\vx_{ref} - \vx}, \mathbf{S}^T]^T|_{t=k}.
\end{equation*}

A matrix of weightings $Q$ defines the relative importance of each cost function. When the drone is assigned to take a shot, the weighting matrix does not penalize reference position error. When the drone transitions between shots, the relative viewpoint and distance are not penalized, although the image position cost $c_{image}$ is kept. This keeps the drones pointing toward the target while repositioning for the next shot.

We use a third order
collocation method that implicitly represents the drone's trajectory as a third order spline and the
control inputs as piecewise linear functions of time as described in \cite{underactuated}. The dynamics constraints are applied at the
collocation points $t_{c,n}$, and the drone state. The full MPC optimization can be written as:
\begin{subequations}\label{eq:mpc}
	\begin{alignat}{2}
		\min\limits_{\vx^{1:N}, \vu^{1:N}}  ~~        
		& \sum_{k=1}^{N-1} \norm{\hat{\mathbf J}^k}^2_Q + \norm{\hat{\mathbf J}^N}^2_{Q_f}\\
		\text{s.t.}	~~	        & \vx^1 = \vx(0), \\
        & \dot{\vx}(t_{c,n}) = f(\vx(t_{c,n}), \vu(t_{c,n})),\\
        & \vx^k \in \mathcal{X},\\
        & \vu \in \mathcal{U}, \\  
		& \forall k\in \{1,\dots,N\}.
	\end{alignat}
\end{subequations}

At each time step, the drone solves the formulated nonlinear constrained optimization problem online to
generate a local trajectory and executes the first time step controls in a receding
horizon fashion. The optimization is performed using CasADi \cite{Andersson2019} for automatic differentiation and IPOPT \cite{wachter_implementation_2006} for optimization. Note that the optimization is not necessarily solved to completion --- the current iterate after 40 ms of computation is used to select the next control output.

In order to estimate the costs at each timestep in the horizon, the drone must have
an estimate for the future trajectories of dynamic obstacles and the videography target. Our prediction model assumes constant linear and angular velocity in the target's body frame, based on current estimated velocity. For our experiments the velocity estimate is provided by a motion capture system. 

We implement the MPC for multiple drones in an asynchronous manner: the MPC is solved independently
for each drone, and the expected trajectory each drone generates is used to compute collision
avoidance by future iterations of the other drones' MPC as described in detail in \cite{Zhu2019RAL}.

\section{EXPERIMENTAL RESULTS\label{sec:experiments}}

We validated the tracking system with a pair of Bebop 2 drones tracking a human-driven RC racecar\footnote{\url{https://mit-racecar.github.io/}}. In order to generate the predictive target distribution necessary to calculate expected shot costs, the racecar was driven around two obstacles in a loosely defined pattern for ten warmup laps. Figure \ref{fig:experiment_track} shows the paths from these ten runs and their mean. Note that the car's speed varies within each run and comes to a stop at several locations. The warmup runs were used to model a multivariate Gaussian distribution over the car's trajectory. Nine desired shots were specified for the drone, varying in length from four to six seconds and all constrained to occur within the ninety second duration of the car's lap. The image position parameters $C_x$ and $C_y$ were set the center the target in the frame in all shots. The settings for the other parameters are shown in Figure \ref{fig:experimental_plot}. The high-level shot planner assigned each drone a sequence of shots. The cost $H_{shot}(t)$ for heuristic position $\hat{\mathbf{x}}(t)$ for each shot is shown in Figure \ref{fig:experiment_heuristic_costs}. The planner is able to find an assignment of shots to the drones such that the expected viewpoint cost for each shot is quite low. We note that the planner assigns overlapping shots to the two drones resulting in a sequence of shots that would have been infeasible with a single drone.

The resulting assignments were tracked over a series of ten trials in which the racecar was manually driven along the same route as the warmup laps. Figure \ref{fig:experiment_track} shows the trajectories for the drones during one of the runs, in addition to the car's nominal path and two obstacles. The figure also demonstrates that the planning algorithm's penalty for transition distance between shots $H_{trans}$ leads to the assigned shots being spatially segmented between the drones, reducing the total distance each has to travel.

Figure \ref{fig:experimental_plot} illustrates the relative viewpoint achieved by each
drone relative to the desired parameters during each shot. During each assigned shot window (denoted in green), the viewpoint parameters (in blue) are close to their desired values (in orange). The high-level planning heuristic and reference trajectory places the drones near the desired viewpoints, so in many cases the desired viewpoint is reached before the shot even starts. The drones are able to achieve lower-deviation trajectories during some shots compared to others. The higher-deviation shots occur when the desired drone position is obstructed (because of a static obstacle or other drone), or if the trajectory necessary to follow the desired position is dynamically infeasible (most commonly when the car is turning quickly). Even in cases where there is significant deviation from the desired viewpoint, such as the distance in Drone 1's third shot (at about T = 25s in \eqref{fig:experimental_plot:d1dist}), we note that the other two parameters have low deviation and the drone maintains an unoccluded view of the target as shown in the accompanying video. Instances where the observed viewpoint deviations are greater than expected from the heuristic costs in Figure \ref{fig:experiment_heuristic_costs} are mostly due to the drones acting more conservatively when close to obstacles than expected by the planner.

\begin{figure}
    \vspace*{-.2cm}
    \begin{subfigure}{\columnwidth}
    \centering
    \includegraphics[width=.8\columnwidth]{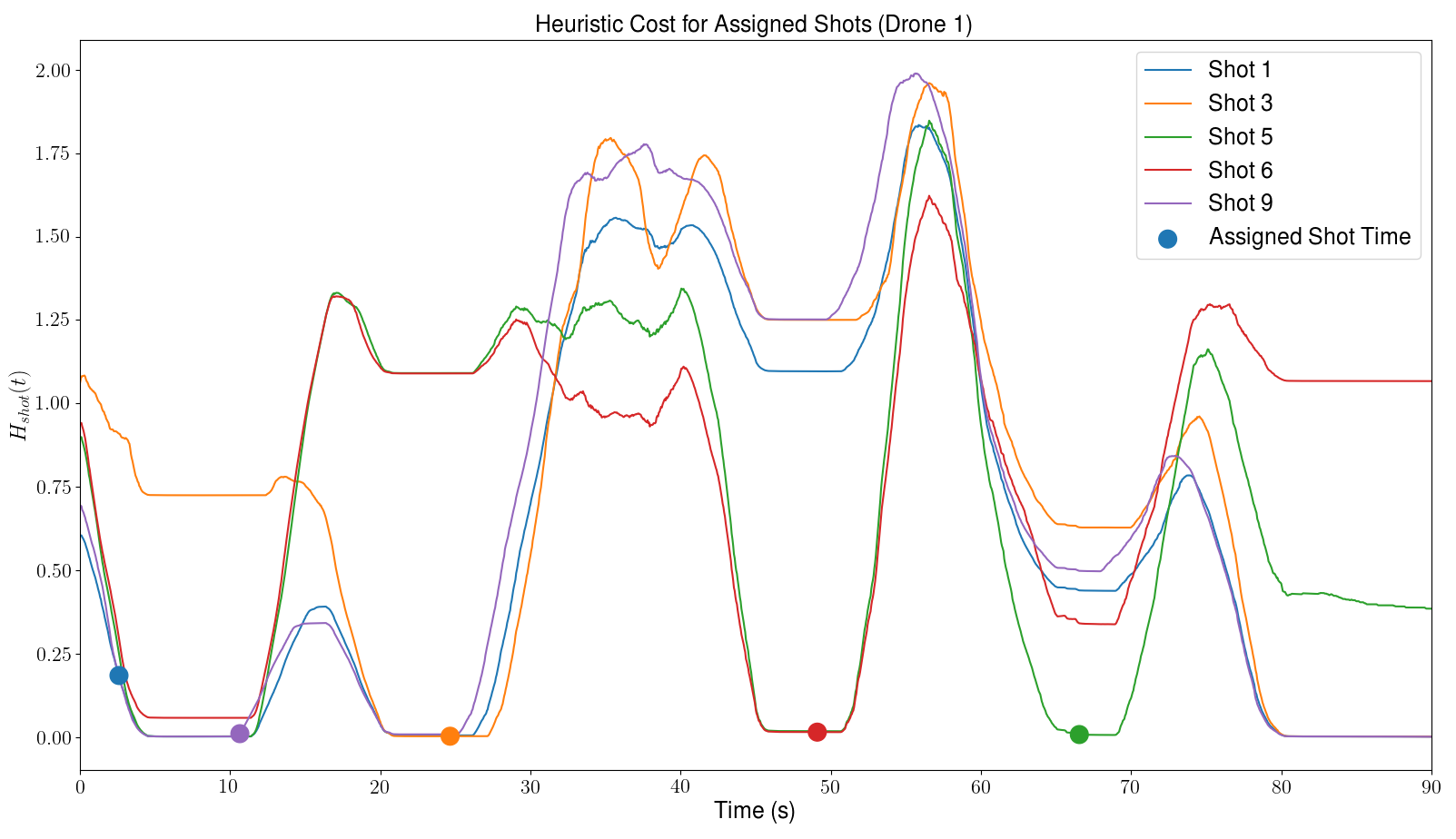}
    \caption{Expected cost $H_{shot}(t)$ of path heuristic $\hat{\mathbf{x}}(t)$ for each of the five shots ultimately assigned to Drone 1.}
    \label{fig:experiment_heuristic_costs:1}
    \end{subfigure}
    \begin{subfigure}{\columnwidth}
    \centering
    \includegraphics[width=.8\columnwidth]{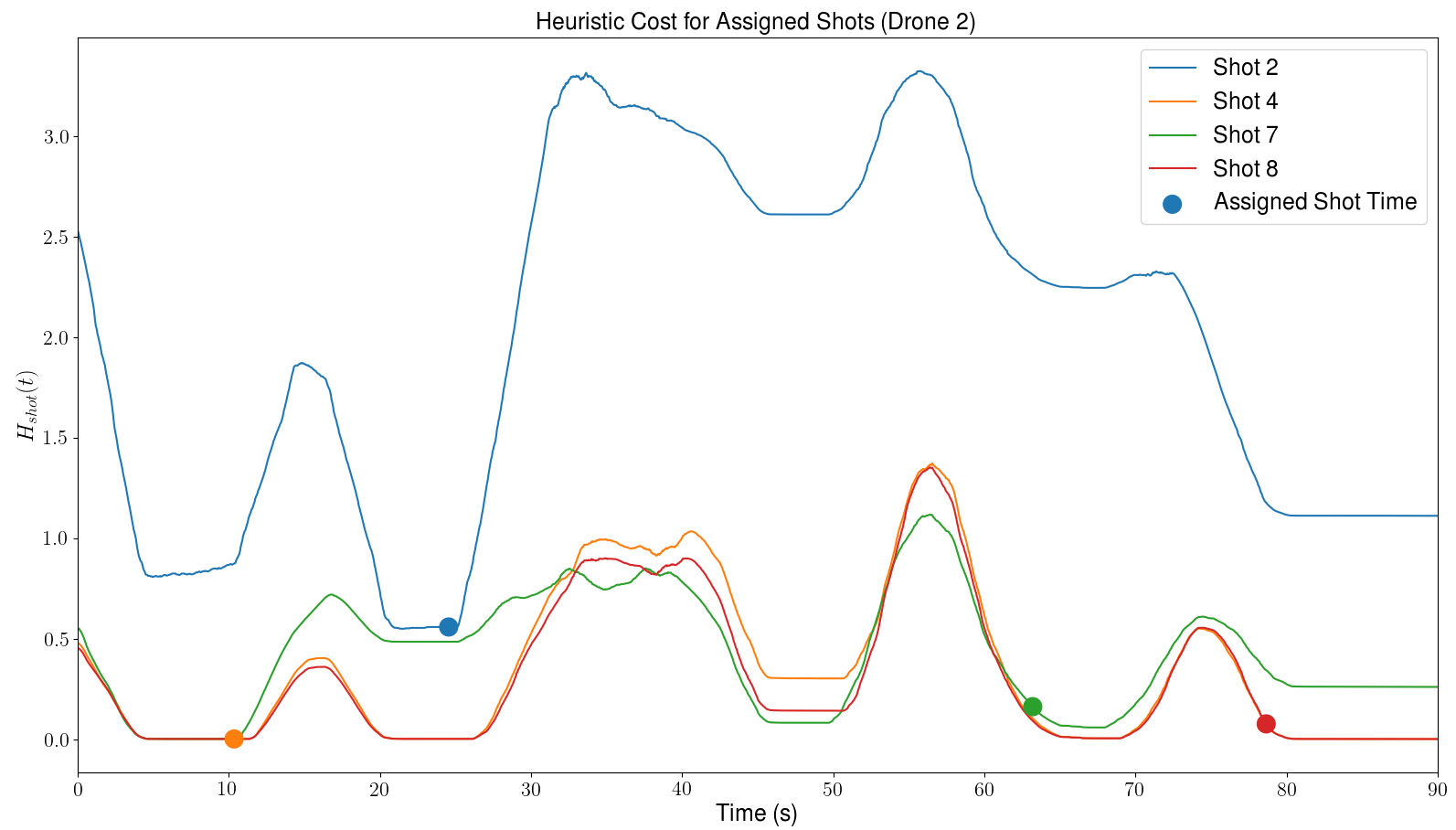}
    \caption{Expected cost $H_{shot}(t)$ of path heuristic $\hat{\mathbf{x}}(t)$ for each of the four shots ultimately assigned to Drone 2.}
    \label{fig:experiment_heuristic_costs:2}
    \end{subfigure}
    \caption{Cost associated with heuristic path over time for each desired shot. The solid circles denote the time that the high level planner determined was the lowest expected cost.}
    \label{fig:experiment_heuristic_costs}
\end{figure}
\begin{figure}
\vspace*{-1.5cm}
    \begin{subfigure}{0.47\columnwidth}
        \centering
        \includegraphics[width=\textwidth]{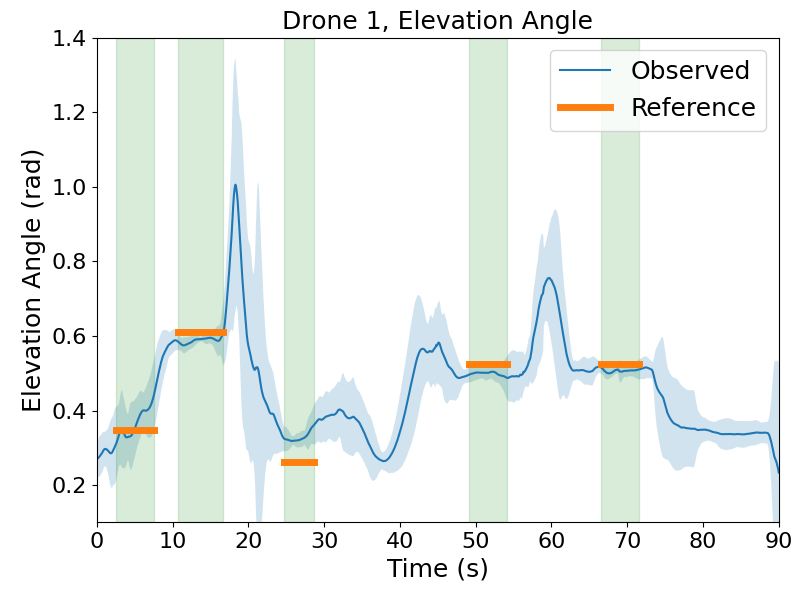}
        \caption{Drone 1 Elevation}
    \end{subfigure} 
        \begin{subfigure}{0.47\columnwidth}
        \centering
        \includegraphics[width=\textwidth]{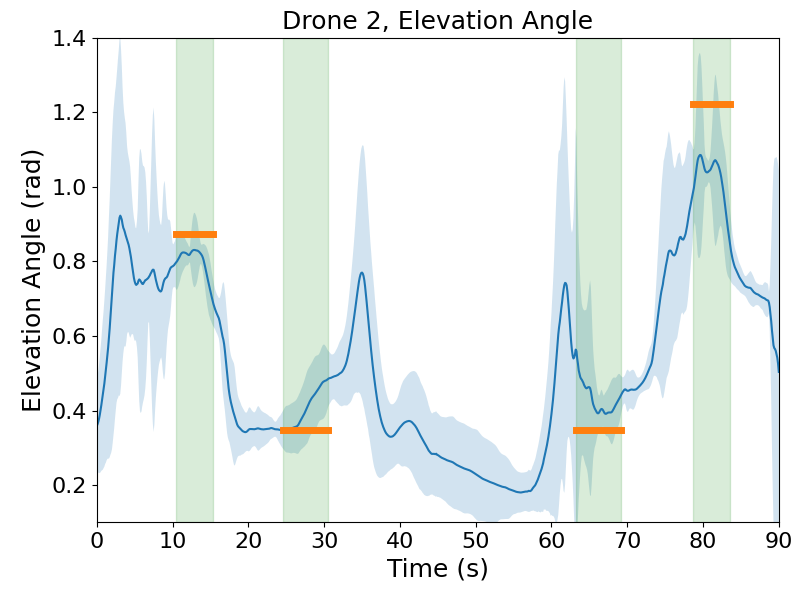}
        \caption{Drone 2 Elevation}
    \end{subfigure} 
        \begin{subfigure}{0.47\columnwidth}
        \centering
        \includegraphics[width=\textwidth]{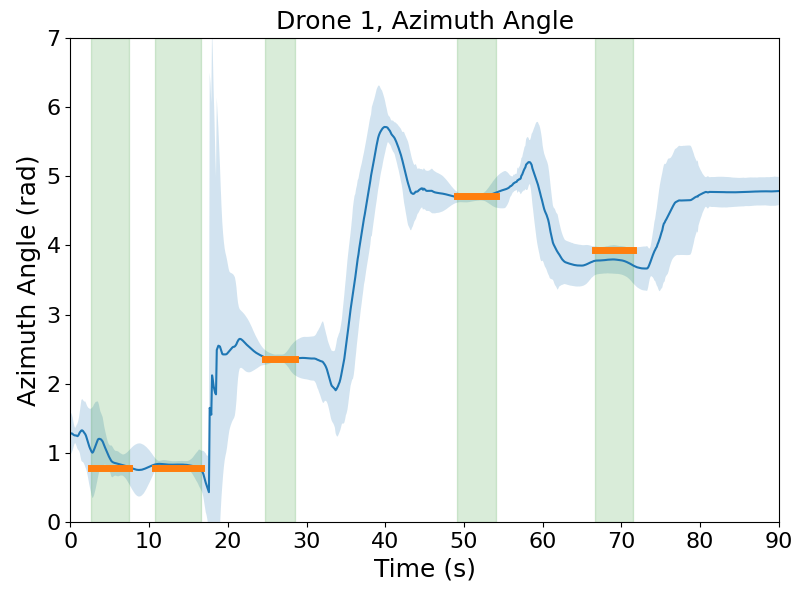}
        \caption{Drone 1 Azimuth}
    \end{subfigure} 
        \begin{subfigure}{0.47\columnwidth}
        \centering
        \includegraphics[width=\textwidth]{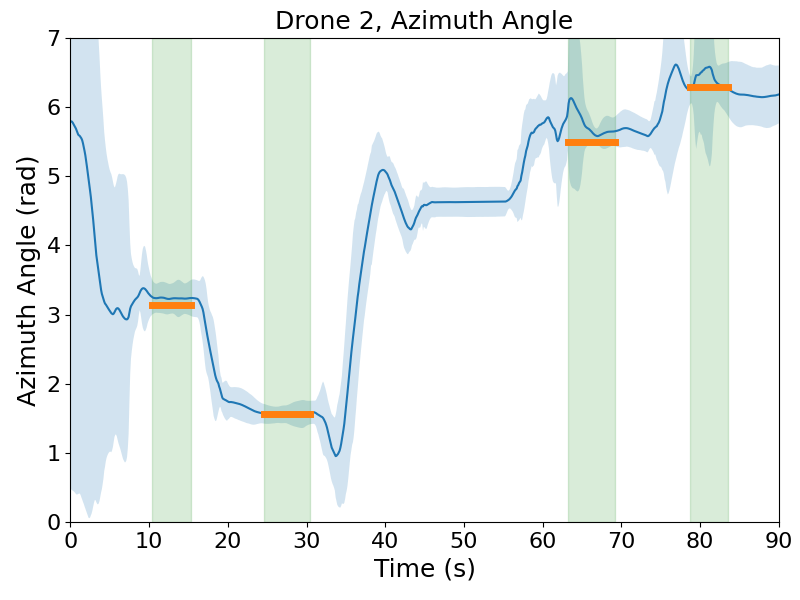}
        \caption{Drone 2 Azimuth}
    \end{subfigure} 
        \begin{subfigure}{0.47\columnwidth}
        \centering
        \includegraphics[width=\textwidth]{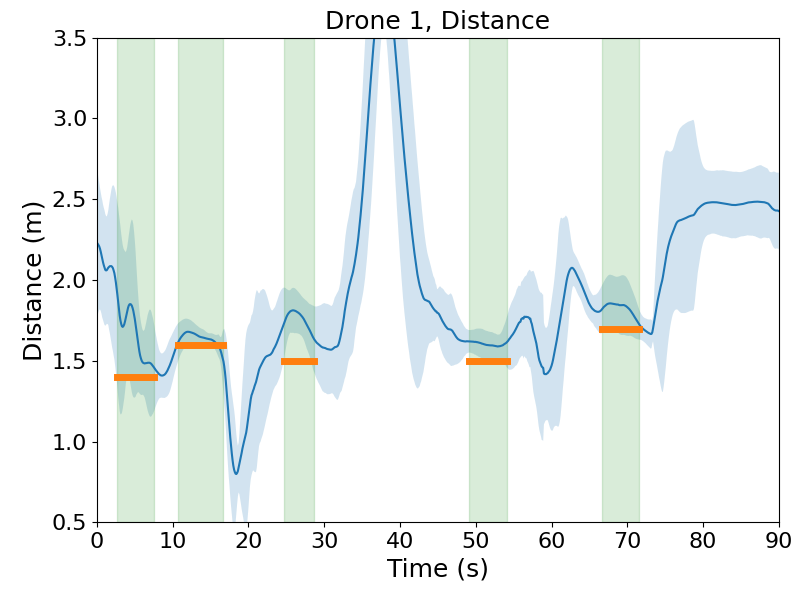}
        \caption{\label{fig:experimental_plot:d1dist}Drone 1 Distance}
    \end{subfigure} 
        \begin{subfigure}{0.47\columnwidth}
        \centering
        \includegraphics[width=\textwidth]{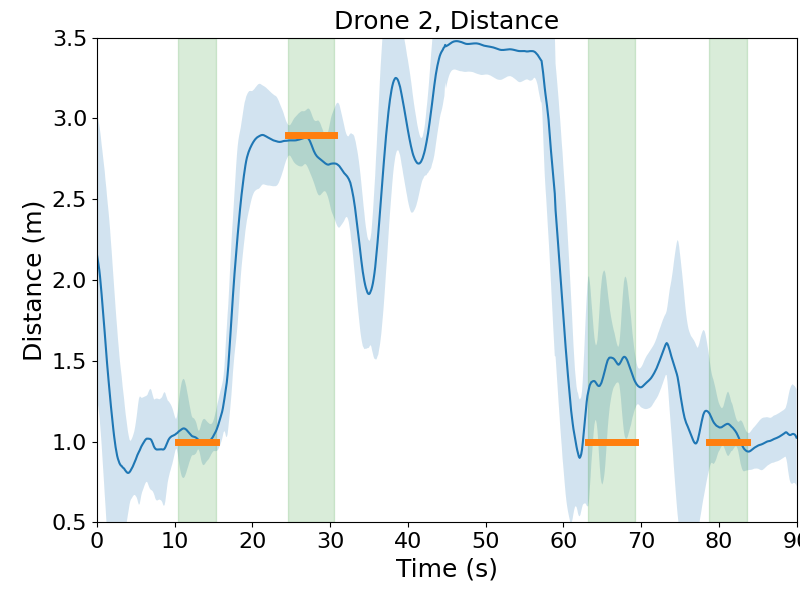}
        \caption{Drone 2 Distance}
     \end{subfigure} 
 \caption{Performance of the two drones. Active shots are shaded in green, and orange bars represent the reference. The solid blue line represents the mean value over the 10 trials, and the transparent blue denotes one standard deviation of the data from the 10 trials. The drones converge to the reference values while in a shot.}
\label{fig:experimental_plot}
\vspace{-1cm}
\end{figure}

\begin{figure}
    \centering
    \includegraphics[width=.8\columnwidth]{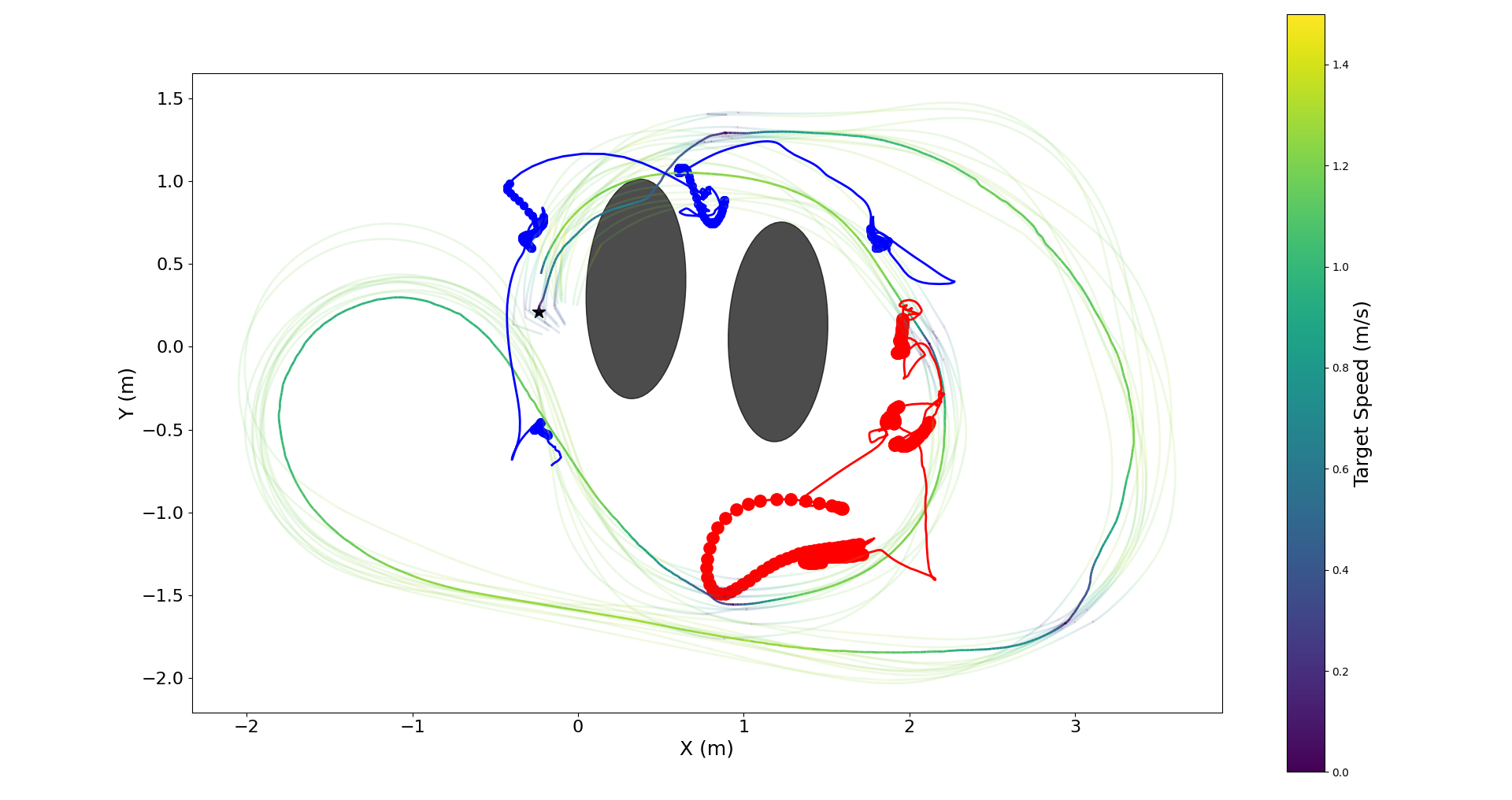}
    \caption{The predicted mean of the racecar trajectory (opaque blue/yellow curve) and its training runs (transparent blue/yellow curves). Car speed shown on color axis. The drone trajectories for one run are shown in red and blue. Circular markers denote times when the drones were taking a shot.}
    \label{fig:experiment_track}
    \vspace{-.5cm}
\end{figure}

The performance bottleneck for the high-level assignment algorithm is solving the ILP. The size of the ILP depends on the number of shots and the number of sampled times per shot. Our scenario contained nine shots, with twenty times sampled for each shot. The resulting ILP solved in under thirty seconds on a laptop with a 2.6 GHz Intel Core i7-9750H 6-Core processor and 16 GB RAM. Different balances between optimality and solution speed can be achieved by tuning the number of sampled times. We note that while the ILP does not run in real time, it would have had time to run at least once during the scenario to reassign shots based on updated predictions of the target trajectory. 
\vspace{-.2cm}
\section{CONCLUSIONS}\label{sec:conclusions}

This paper has demonstrated a practical algorithm for assigning a team of drones to capture a series
of desired shots and locally optimizing the drone trajectories to ensure each shot is captured as
well as possible. Our experiments have shown that the planning and control pipeline works on
physical systems in the presence of obstacles and uncertainty over the videography target's
trajectory. Future work will explore an increased uncertainty of the target’s
trajectory, as well as multi-target tracking. 

\addtolength{\textheight}{-.1cm}   




\bibliographystyle{IEEEtran}
\bibliography{IEEEabrv,references}

\end{document}